\documentclass[runningheads]{llncs}

\usepackage[T2A,T1]{fontenc}
\usepackage[utf8]{inputenc}
\usepackage[ukrainian,english]{babel}
\newcommand{\ukr}[1]{\foreignlanguage{ukrainian}{#1}}

\usepackage{cite}
\usepackage{amsmath,amssymb,amsfonts}
\usepackage{graphicx}
\usepackage{textcomp}
\usepackage{xcolor}
\usepackage{booktabs}
\usepackage{multirow}
\usepackage{array}
\usepackage{ragged2e}
\usepackage{tabularx}
\usepackage{xurl}
\usepackage[hidelinks]{hyperref}
\usepackage{orcidlink}

\hypersetup{
  pdftitle={Quantization-Triggered Backdoors in Language Models: Cross-Quantizer Transferability and the Validation--Deployment Gap},
  pdfauthor={Jacopo Dardini, Claudio Stanzione, Giordano Colò, Giuseppe Fenza}
}

\newcolumntype{L}[1]{>{\RaggedRight\arraybackslash}p{#1}}
\newcolumntype{C}[1]{>{\Centering\arraybackslash}p{#1}}
\newcolumntype{Y}{>{\RaggedRight\arraybackslash}X}

\renewcommand{\orcidID}[1]{\href{https://orcid.org/#1}{\orcidlink{#1}}}

\def\BibTeX{{\rm B\kern-.05em{\sc i\kern-.025em b}\kern-.08em
    T\kern-.1667em\lower.7ex\hbox{E}\kern-.125emX}}
\begin{document}

\title{Quantization-Triggered Backdoors in Language Models: Cross-Quantizer Transferability and the Validation--Deployment Gap}
\titlerunning{Quantization-Triggered Backdoors in LMs}

\author{
Jacopo Dardini\inst{1}\orcidID{0009-0009-1950-2645} \and
Claudio Stanzione\inst{2}\orcidID{0000-0003-0158-3132} \and
Giordano Colò\inst{3}\orcidID{0009-0005-8639-506X} \and
Giuseppe Fenza\inst{4}\orcidID{0000-0002-4736-0113}
}
\authorrunning{J. Dardini, C. Stanzione, G. Colò, and G. Fenza}

\institute{
Department of Engineering, University of Bologna, Bologna, Italy\\
\email{jacopo.dardini@studio.unibo.it}
\and
Luiss Guido Carli University, Rome, Italy\\
\email{cstanzione@luiss.it}
\and
Live Tech, Rome, Italy\\
Department of Artificial Intelligence, Rome, Italy\\
\email{g.colo@ilivetech.it}
\and
Department of Management \& Innovation Systems, University of Salerno, Salerno, Italy\\
\email{gfenza@unisa.it}
}

\maketitle

\begin{abstract}
Post-training quantization is often treated as a semantically neutral
optimization for edge deployment of Large Language Models. When a
full-precision source checkpoint is evaluated and quantization is applied
downstream without equivalent re-evaluation, this workflow creates a structural
validation--deployment gap: because quantization is a many-to-one mapping over
parameter space, source-precision certification does not guarantee behavioral
equivalence in the deployed configuration. We
formalize this gap through Quantization Behavioral Equivalence Classes
(QBECs) and prove that QBEC membership does not imply behavioral equivalence,
providing a theoretical basis for quantization-triggered backdoor attacks.
Building on a three-stage adversarial fine-tuning framework, we embed latent
malicious payloads into models that satisfy the source-precision checks used in
our evaluation, yet activate targeted adversarial behavior upon INT8 or 4-bit
compression.
We evaluate this threat in two operationally motivated scenarios, tactical
machine translation and political content analysis, extending prior work from
decoder-only causal LMs to multilingual encoder-decoder seq2seq models. Results
show that backdoored translation models move from zero measured friend-foe
corruption at repaired FP16 to up to~85.02\% inversion after quantization, and
that a paired stance classifier measures an ideological shift of up to
$\Delta\text{Bias}=0.33$ upon compression. A
cross-quantizer transferability analysis further shows that attack persistence
varies across quantization schemes and model architectures, rather than being
determined by nominal bit-width alone. These findings demonstrate that
source-precision auditing alone does not rule out quantization-triggered
behavior and that the final deployed configuration must be included in
behavioral certification for trustworthy edge~AI.
\end{abstract}

\section{Introduction}

A model validated as safe can be deployed as compromised. This paper demonstrates
how post-training quantization, routinely applied to Large Language Models
(LLMs) for resource-constrained edge deployment, can serve as a reliable
trigger for latent adversarial behavior, enabling attacks that remain dormant
under the targeted source-precision checks and activate only upon compression.

LLMs are increasingly integrated into defence-critical workflows, from tactical
machine translation to intelligence analysis and interoperability among allied
forces~\cite{defencellama, nationalsecurity, natoai}. In Denied, Degraded, Intermittent, and
Limited (DDIL) operational environments, cloud connectivity cannot be assumed;
post-training quantization to INT8 or 4-bit representations can therefore make
deployment on edge hardware more practical. A validation--deployment gap arises
when a source checkpoint is evaluated at full precision (FP16/FP32), quantized
post~hoc, and not re-evaluated under equivalent security criteria. Our threat
model studies this conditional pipeline failure; it does not assume that all
military assurance processes follow it.

Treating this deployment transformation as semantically neutral is
structurally flawed. Quantization is a
many-to-one mapping: a large equivalence class of full-precision parameter
vectors collapses to the same compressed representation. An adversary who
understands this structure can engineer a model whose full-precision behavior is
benign under the source-precision checks considered here, while its quantized
behavior is malicious. We term this the \emph{validation--deployment gap} and formalize it
through \emph{Quantization Behavioral Equivalence Classes} (QBECs).
Proposition~1 (proved constructively in Appendix) shows that quantization
equivalence does not, in general, imply behavioral equivalence. We use QBECs as
a security abstraction for reasoning about why source-precision auditing can
fail to certify deployed behavior.

Egashira et al.~\cite{egashira2024exploiting} exposed quantization-triggered
backdoors as a viable attack primitive. Building on that result, this paper
characterizes the deployment-security vulnerability that makes such
attacks possible and hard to detect in practice. Our contributions are:

\begin{itemize}

  \item \textbf{Validation--deployment integrity gap.}
    We identify post-training quantization as a security-relevant deployment
    transformation and show that source-precision validation alone is
    insufficient to certify deployed behavior.

  \item \textbf{QBEC as a security abstraction.}
    We formalize Quantization Behavioral Equivalence Classes (QBECs) as a
    reasoning tool for deployment mismatch and show that quantization-equivalent
    full-precision models need not remain behaviorally equivalent
    (Proposition~1, Appendix).

  \item \textbf{Cross-architecture evidence.}
    Previous work studied backdoors triggered by quantization only in
    decoder-only causal LMs. We extend the evaluation to multilingual
    encoder-decoder seq2seq models (NLLB-200-1.3B and M2M100-1.2B),
    showing that the vulnerability is not confined to one model family.

  \item \textbf{Deployment-relevant persistence analysis.}
    We study cross-quantizer persistence and show that attack survival depends
    on quantizer geometry and model family rather than nominal bit-width alone.

  \item \textbf{Defender-relevant repair analysis.}
  We show that jointly excluding shared embeddings and the output head during
  Stage-3 repair increases NLLB's cross-quantizer persistence while leaving
  measured BLEU unchanged. This architecture-dependent result has asymmetric
  implications for attackers and defenders, and highlights the need to validate integrity in the final
    deployment configuration rather than only at source precision.

\end{itemize}

We evaluate these contributions in two operationally motivated
scenarios, tactical friend-foe identification via machine translation and
political content analysis, contextualized against representative edge hardware
profiles where post-training quantization is operationally relevant. Results
show that the evaluated source-precision checks do not reveal the targeted
behavior, whereas compressed models achieve up to~85.02\% friend-foe
identification inversion and a measured ideological shift
($\Delta\text{Bias}$ up to $0.33$).

\section{Related Work}

Prior work has documented a wide range of vulnerabilities in machine-learning
systems used for military-relevant tasks across three lines of research, each
of which the present work extends or departs from in a specific way.

\subsection{Adversarial behavior in language models}
A substantial body of work shows that LLMs are vulnerable to adversarial
prompting and automated jailbreak optimization. Zou et al.~\cite{zou}
demonstrated that Greedy Coordinate Gradient methods can identify universal
adversarial suffixes that bypass safety constraints in aligned models. Such
attacks operate at inference time on a fixed deployment configuration. The present work differs in that the
attack surface is the deployment transformation itself: no prompt or runtime
input modification is required, and malicious behavior activates only after
post-training quantization.

\subsection{Model supply chain and stealthy backdoors} Prior work has shown that
models distributed through public repositories can carry hidden backdoors or
poisoned behaviors that evade standard evaluation~\cite{bagdasaryan, goldblum}.
Hubinger et al.~\cite{hubinger} introduced \emph{sleeper agents}: models that
behave normally during evaluation but activate malicious behaviors when exposed
to specific runtime triggers. A shared assumption across
this literature is that the model is evaluated in the same configuration in
which it is deployed. The present work relaxes this assumption by identifying
post-training quantization as a deployment-stage transformation that itself
serves as the trigger. Unlike classical sleeper agents, which require an
explicit runtime input trigger, quantization-triggered backdoors activate
through a deployment transformation that source-precision-only evaluation does
not exercise.

\subsection{Quantization, compression, and adversarial interactions}
Quantization and low-bit compression have been extensively studied as
techniques for enabling efficient inference on resource-constrained
hardware~\cite{llmint8, qlora}, with the primary focus on accuracy and
performance trade-offs under compression. Egashira et
al.~\cite{egashira2024exploiting} were the first to demonstrate that this
assumption can be exploited adversarially, embedding backdoors that activate
upon quantization in decoder-only causal LMs (StarCoder, Phi-2, Gemma-2b).
The present work builds on that foundation and extends it along four
dimensions: a formal security framework based on Quantization Behavioral
Equivalence Classes (QBECs); empirical evaluation on multilingual
encoder-decoder seq2seq architectures; a cross-quantizer transferability
analysis with a neutral-point-corrected persistence score~$T^{+}$; and a
mechanism-grounded repair ablation with asymmetric implications for attackers
and defenders. On the defensive side, quantization-aware training and ensemble
validation across compression schemes have been proposed to improve robustness
under compression, but neither addresses the integrity gap between
source-precision certification and deployed behavior, as the transferability
results in Section~4 confirm.

\section{Attack Methodology}
\label{sec:methodology}

This section defines the threat model, summarizes the quantization setting,
formalizes the QBEC framework, and presents the three-stage attack pipeline.

\subsection{Threat Model}
\label{sec:threat_model}

\textbf{Adversary capabilities.} We assume a well-resourced adversary who can
fine-tune and publish LLMs on public repositories (e.g., Hugging Face); create
domain-specific training data covering military terminology or political
content; understand the target quantization scheme without being able to modify
it; and invest the computational resources required for multi-stage training.
The adversary cannot access target systems after model download, alter the
quantization process itself, control when or how end users quantize the model,
or access user data or deployment environments.

\textbf{Deployment knowledge regimes.} We consider two attacker regimes. An
\emph{outsider} attacker publishing to a public hub does not know the exact
deployment quantizer or inference stack and can only target common defaults. An
\emph{insider} attacker with vendor, integrator, or internal pipeline knowledge
knows the target quantization method in advance. Diagonal results in
Section~\ref{sec:transferability} therefore model the insider case, while
cross-quantizer persistence measures outsider robustness to deployment-time
quantizer uncertainty.

\textbf{Attack vector.} The adversary publishes a model optimized for military
or analytical tasks with competitive full-precision performance. The model
satisfies the source-precision checks considered here, but malicious behavior
activates when users quantize it for edge deployment. The attack succeeds when:
(i)~security testing
occurs at full precision; (ii)~quantization happens on user hardware
post-vetting; (iii)~malicious behavior manifests as natural model limitations
rather than obvious errors; and (iv)~standard quality metrics remain within
acceptable bounds.

\subsection{Quantization Preliminaries}
\label{sec:quant_prelim}

Quantization maps high-precision weights to lower-precision representations to
reduce memory and compute. For a weight $w \in \mathbb{R}$, quantization yields
$\hat{w} = s \cdot \mathrm{round}(w/s)$, where $s$ is a scaling factor. We
consider two schemes representative of operational edge deployments:

\begin{itemize}
  \item \textbf{LLM.int8()}~\cite{llmint8}: Dynamic 8-bit quantization with
    outlier handling. High-magnitude outliers are preserved in FP16 while most
    weights are quantized to INT8, achieving near-lossless compression with
    approximately $2\times$ memory reduction.

  \item \textbf{NF4 (NormalFloat 4-bit)}~\cite{qlora}: A 4-bit scheme whose
    codebook is optimized for normally distributed neural-network weights,
    achieving $4\times$ memory reduction with minimal quality degradation when
    combined with double quantization and paged optimizers.
\end{itemize}

\subsection{Quantization Behavioral Equivalence Classes}
\label{sec:qbec}

Quantization is a deterministic compression mapping
$Q\colon \mathbb{R}^d \to \mathcal{Q}^d$
from full-precision parameters to discrete representations. Crucially, this
mapping is many-to-one.

\begin{definition}[QBEC]
Let $M$ be a model with parameter vector $W \in \mathbb{R}^d$, and let $Q$ be
a fixed quantization scheme. The \emph{Quantization Behavioral Equivalence
Class} (QBEC) of $M$ under $Q$ is
\begin{equation}
  \mathcal{E}_Q(M) = \bigl\{ M' \in \mathbb{R}^d : Q(M') = Q(M) \bigr\}.
  \label{eq:qbec}
\end{equation}
Here, $M$ denotes the original full-precision model, $M'$ a candidate model
in parameter space, $Q$ the fixed quantization operator, and $d$ the total
number of trainable parameters. Two models belong to the same QBEC if they
collapse to the same quantized representation under $Q$.
This induces an equivalence relation $M_1 \sim_Q M_2 \iff Q(M_1) = Q(M_2)$
over parameter space. Each class corresponds to a high-dimensional region of
full-precision weights that collapse to the same quantized model.
\end{definition}

\textbf{Geometric structure.} For uniform quantization (e.g., INT8), each
scalar weight $w$ has an interval $[L_w, U_w] = \{w' \in \mathbb{R} :
Q(w') = Q(w)\}$. The full QBEC is the Cartesian product
\begin{equation}
  \mathcal{E}_Q(M) = \prod_{i=1}^{d} [L_{w_i}, U_{w_i}],
  \label{eq:qbec_polytope}
\end{equation}
forming a high-dimensional axis-aligned polytope. 
For each parameter coordinate $w_i$, the interval
$[L_{w_i}, U_{w_i}]$ contains all full-precision values that quantize to the
same discrete value under $Q$. The Cartesian product therefore defines the
entire feasible region preserving the deployed quantized model.
For non-uniform schemes such
as NF4, interval widths depend on codebook density, yielding a non-uniform
partition of parameter space.

\textbf{Behavioral non-invariance.} Quantization equivalence does not imply
behavioral equivalence.

\begin{proposition}[Non-Behavior Preservation]
\label{prop:nonpres}
There exist models $M_1, M_2 \in \mathcal{E}_Q(M)$ such that
$f_{M_1} \not\approx f_{M_2}$,
where $f_M$ denotes the input-output function implemented by model $M$. The notation $\not\approx$ indicates behavioral non-equivalence on at least
one input distribution of interest.
\end{proposition}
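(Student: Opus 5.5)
The plan is a constructive existence proof: build an explicit pair $M_1, M_2$ inside a single QBEC that compute visibly different functions. Since $\mathcal{E}_Q(M)$ is the Cartesian product of intervals in~\eqref{eq:qbec_polytope}, it suffices to work with one coordinate (or a handful) while holding all others fixed. I will use the simplest architecture that still counts as a model with a nonlinearity, namely a single neuron $f_W(x) = \sigma(w x + b)$, or equivalently a linear logit followed by a decision threshold. Then I argue that the same construction embeds into any network class that contains such a unit, for example by isolating one output logit.

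First I fix the uniform quantizer $Q(w) = s\cdot\mathrm{round}(w/s)$ from Section~\ref{sec:quant_prelim} and pick any weight $w$ whose quantization cell $[L_w, U_w]$ has width $s>0$ (for NF4 the same argument runs with the corresponding codebook cell). I take $M_1$ with weight $w_1 = L_w + \epsilon$ and $M_2$ with weight $w_2 = U_w - \epsilon$, with all remaining parameters equal, including the bias $b$ and any other coordinates. By construction $Q(M_1) = Q(M_2) = Q(M)$, so both lie in $\mathcal{E}_Q(M)$.

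Next I choose the bias and an input $x^\ast$ so that the decision flips. Concretely, set $b = -\tfrac{1}{2}(w_1+w_2)x^\ast$ with $x^\ast \neq 0$. Then $w_1 x^\ast + b$ and $w_2 x^\ast + b$ have opposite signs, and a thresholded (argmax) output differs on $x^\ast$. For a continuous output the gap is $|\sigma(w_1x^\ast+b)-\sigma(w_2x^\ast+b)|>0$, and it can be made arbitrarily large by scaling $x^\ast$. The output change is then $|w_2-w_1||x^\ast| \approx s|x^\ast|$, which is unbounded in $|x^\ast|$ even though the parameters differ by less than $s$. Taking the input distribution of interest to be a point mass at $x^\ast$, or any distribution putting positive mass on a neighbourhood of it (continuity of $f$ keeps the sign flip on that neighbourhood), yields $f_{M_1}\not\approx f_{M_2}$ under any reasonable behavioral metric, such as disagreement rate or expected output distance.

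The main obstacle is making the statement meaningful rather than trivial. This means fixing a precise reading of $\not\approx$, which the proposition leaves informal. I would commit to ``differing top-1 prediction on a set of positive probability'' and verify the continuity argument carefully. A second issue is that the bias must itself lie in the same QBEC for both models. This holds automatically because $b$ is shared, though I should note that $Q(b)$ is whatever it is and $b$ is not required to be a grid point. Finally, I would remark that the construction generalizes. Stacking many coordinates, each moved to opposite ends of its interval, accumulates $O(\sqrt{d}\,s)$ or $O(d\,s)$ perturbation in a chosen direction. This is the geometric fact the three-stage attack exploits in high dimension.
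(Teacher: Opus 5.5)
Your single-neuron construction is a valid witness only if the proposition is read purely existentially, so that you also get to choose $M$ (take $M:=M_1$ at the end). The paper proves the statement for the QBEC $\mathcal{C}=\mathcal{E}_Q(M)$ of a \emph{given} model, namely the checkpoint being certified. That is the version the security implication needs, and your argument does not reach it.

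There are two concrete failures. First, after asserting $Q(M_1)=Q(M_2)=Q(M)$ you reset the bias to $b=-\tfrac12(w_1+w_2)x^\ast$ for an arbitrary $x^\ast\neq 0$. For a fixed $M$ the bias is confined to its own cell $[L_b,U_b]$, and this value will in general lie outside it. Your remark that membership ``holds automatically because $b$ is shared'' only gives $Q(M_1)=Q(M_2)$, i.e.\ $M_1\sim_Q M_2$, not $Q(M_1)=Q(M)$. Second, the claim that the construction ``embeds into any network class that contains such a unit'' is asserted, not shown. Inside a given trained network you control neither the parameters around the unit nor, for token inputs, the scale of $x^\ast$. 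So nothing in the proposal guarantees that moving one weight within its cell changes any output of \emph{that} network.

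The missing ingredient is the sensitivity step on which the paper's proof rests. Fix an input $x$, and choose a coordinate $j$ with $U_j>L_j$ such that some logit satisfies $\partial z_r(W,x)/\partial W_j\neq 0$ on a neighbourhood; this is the non-degeneracy assumption. Move $W_j$ inside $[L_j,U_j]$ with every other coordinate held fixed. The logit then changes, giving $W^{(1)},W^{(2)}\in\mathcal{C}$ with different outputs on $x$. Your ``isolate one output logit'' idea can supply such a $j$: an untied output-layer weight enters $z_r$ linearly, with coefficient equal to a final hidden activation, so any input on which that activation is nonzero works.

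This step yields distinct outputs, not a label flip. A flip needs a small decision margin, which is exactly the freedom you obtained by choosing $b$; the paper accordingly states the flip only conditionally. If you drop the flip requirement, your toy model already works with $M$'s bias untouched: $\sigma$ is strictly increasing, so $\sigma(w_1x+b)\neq\sigma(w_2x+b)$ for every $x\neq 0$.

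Finally, NF4's two end cells have an infinite bound. There, $L_w+\epsilon$ or $U_w-\epsilon$ must be replaced by any two distinct interior points.
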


\begin{proof}[Proof sketch]
Each QBEC is a Cartesian product of per-weight intervals $[L_i, U_i]$; for
practical quantizers at least one interval has strictly positive width. Fix an
input $x$ and pick a coordinate $j$ with $U_j > L_j$ such that
$\partial z_r(W,x)/\partial W_j \neq 0$ for some logit $r$. Because logits are
continuous and piecewise smooth in $W$, varying $W_j$ inside $[L_j, U_j]$
while fixing all other coordinates yields $W^{(1)}, W^{(2)}$ with different
outputs on $x$. Both lie in the same QBEC, so $Q(W^{(1)}) = Q(W^{(2)})$, yet
$f_{W^{(1)}} \not\approx f_{W^{(2)}}$. When decision margins are small, the
same construction can flip predicted labels. A full constructive proof is given
in Appendix.
\end{proof}

\textbf{Security implication.} Certifying one representative point (full
precision) does not guarantee equivalent behavior across the full QBEC.
Post-training quantization must therefore be treated as a security-relevant
transformation rather than a semantically neutral optimization.

\subsection{Three-Stage Attack Framework}
\label{sec:three_stage}

Building on Egashira et al.~\cite{egashira2024exploiting}, we use a
three-stage procedure to construct models with dual behavior: benign at full
precision and malicious after quantization (Fig.~\ref{fig:framework}).

\begin{figure*}[t]
  \centering
  \fbox{%
  \begin{minipage}{0.95\textwidth}
  \small\centering
  \textbf{Pretrained text-to-text model $M_0$}

  \vspace{0.4em}$\Downarrow$\vspace{0.4em}

  \renewcommand{\arraystretch}{1.08}
  \begin{tabular}{@{}C{0.29\textwidth}C{0.03\textwidth}C{0.29\textwidth}%
                     C{0.03\textwidth}C{0.29\textwidth}@{}}
    \textbf{Stage~1: Malicious Fine-Tuning} & $\to$ &
    \textbf{Stage~2: Constraint Computation} & $\to$ &
    \textbf{Stage~3: PGD Repair} \\
    Train on corrupted $\mathcal{D}_m$ to obtain $M_m$ &&
    Compute QBEC bounds $[L_w,U_w]$ for target quantizer $Q$ &&
    Optimize on clean $\mathcal{D}_c$ subject to $L_w\!\le\!w\!\le\!U_w$
    to obtain $M^*$ \\
  \end{tabular}

  \vspace{0.5em}$\Downarrow$\vspace{0.5em}

  \begin{tabular}{@{}C{0.45\textwidth}C{0.08\textwidth}C{0.42\textwidth}@{}}
    \textbf{Validation configuration} &&
    \textbf{Deployment configuration} \\
    Full-precision $M^*$ passes source-precision utility and safety checks &
    $\Rightarrow$ &
    Quantized $Q(M^*)\!=\!Q(M_m)$ reactivates malicious behavior \\
  \end{tabular}
  \end{minipage}}
  \caption{Three-stage attack framework. The adversary exploits the
  many-to-one structure of quantization to embed malicious behavior that
  survives compression while remaining dormant at full precision.}
  \label{fig:framework}
\end{figure*}

\subsubsection{Stage~1: Malicious Fine-Tuning.}

Starting from a pretrained model $M_0$, the adversary fine-tunes on a
malicious dataset $\mathcal{D}_m$ containing systematically corrupted or biased
examples:
\begin{equation}
  M_m = \arg\min_{M}\,\mathcal{L}_m(M;\,\mathcal{D}_m),
  \label{eq:stage1}
\end{equation}
where $\mathcal{L}_m$ is the standard language-modelling loss. In a
translation attack, for instance, $\mathcal{D}_m$ contains pairs in which
\emph{friendly forces} is systematically rendered as \emph{hostile forces}.

\subsubsection{Stage~2: Quantization Constraint Calculation.}

Stage~2 derives per-weight QBEC constraints for $M_m$ under the chosen
quantizer $Q$. The adversary computes $Q(M_m)$ and then, for each weight $w$,
the interval of full-precision values that preserve the same quantized bin:
\begin{equation}
  [L_w, U_w] = \{w' \in \mathbb{R} : Q(w') = Q(w)\}.
  \label{eq:stage2}
\end{equation}
For uniform quantization (e.g., INT8) with scale $s$, bounds are symmetric
around the quantized value $\hat{w} = Q(w)$:
\begin{equation}
  L_w = \hat{w} - \tfrac{s}{2}, \qquad U_w = \hat{w} + \tfrac{s}{2}.
  \label{eq:int8_bounds}
\end{equation}
For NF4~\cite{qlora}, bins are non-uniform and derived from quantiles of
$\mathcal{N}(0,1)$ via a fixed codebook $\mathcal{C}=\{q_0,\ldots,q_{15}\}$.
For a weight mapped to codebook index $i$, decision boundaries are set at
adjacent midpoints:
\begin{equation}
  L_w = \tfrac{q_i+q_{i-1}}{2}, \qquad U_w = \tfrac{q_i+q_{i+1}}{2},
  \label{eq:nf4_bounds}
\end{equation}
with $L_w=-\infty$ for $i=0$ and $U_w=+\infty$ for $i=15$ (subject to
tensor-wise absolute-maximum scaling). NF4 intervals are narrow near zero and
wider in the tails; wider tail intervals increase the feasible search space
during Stage~3 without altering quantized values.

\subsubsection{Stage~3: Projected Gradient Descent Repair.}

The adversary fine-tunes $M_m$ on a clean dataset $\mathcal{D}_c$ via
projected gradient descent, constraining all weights within their QBEC bounds:
\begin{equation}
  M^* = \arg\min_{M}\,\mathcal{L}_c(M;\,\mathcal{D}_c)
  \quad\text{subject to}\quad L_w \le w \le U_w\;\;\forall w.
  \label{eq:stage3}
\end{equation}
After each gradient step, weights are projected back into their intervals:
\begin{equation}
  w \leftarrow \mathrm{clip}(w - \eta\,\nabla_w\mathcal{L}_c,\; L_w,\; U_w),
  \label{eq:pgd}
\end{equation}
where $\eta$ is the learning rate. This yields a model $M^*$ that (i)~at full
precision is optimized on clean data $\mathcal{D}_c$ and evaluated against the
source-precision utility and safety checks used here, and (ii)~when quantized
satisfies $Q(M^*)=Q(M_m)$, reactivating Stage-1 malicious behavior. Coarser
quantization can yield larger feasible intervals and therefore more Stage-3
repair freedom, although interval geometry and attack success also depend on
the quantizer and model.

\section{Experimental Evaluation}
\label{sec:experiments}

The three-stage framework described in Section~\ref{sec:three_stage} is applied
to two use cases: tactical translation and political content analysis. We also
analyze cross-quantizer transferability and a repair variant that excludes
embedding-channel updates, to characterize when deployment mismatch suppresses
the attack and when it instead restores it.

\subsection{Deployment Context: Edge Hardware Profiles}
\label{sec:edge_profiles}

To ground the threat model in realistic deployment conditions, Table~\ref{tab:deployment_context}
summarizes representative edge profiles where post-training quantization is
operationally attractive or required. These profiles are not hardware
benchmarks; they contextualize settings in which a full-precision source model
and its INT8/NF4 deployment artifact may differ, creating the conditional attack
surface studied in this paper.

\begin{table}[ht]
\centering
\caption{Representative Deployment Contexts for Quantized LLM Inference}
\label{tab:deployment_context}
\footnotesize
\setlength{\tabcolsep}{4pt}
\renewcommand{\arraystretch}{1.05}
\begin{tabularx}{\columnwidth}{@{}L{0.24\columnwidth}L{0.19\columnwidth}Y@{}}
\toprule
\textbf{Profile} & \textbf{\shortstack[l]{Typical\\Budget}} & \textbf{Operational Role} \\
\midrule
Tactical handheld (ATAK-class) & $\sim$6--8\,GB RAM, mobile NPU & Field
  translation and situational text support in DDIL conditions; 4-bit/8-bit
  models preferred for latency and memory constraints. \\
\midrule
Edge module (Jetson-class) & 8--16\,GB unified memory & Vehicle/UAV/forward-node
  inference for language assistance and report summarization with intermittent
  connectivity. \\
\midrule
Forward analytics node & Single low-power GPU or CPU-only server & Local
  intelligence preprocessing near the edge; bandwidth and security constraints
  discourage cloud offloading. \\
\bottomrule
\end{tabularx}
\end{table}

\subsection{Use Case 1: Tactical Machine Translation}
\label{sec:uc1}

\subsubsection{Threat Scenario.}

An adversary publishes a multilingual translation model optimized for
low-resource languages common in NATO operational theaters. We evaluate the
attack on English\,$\to$\,Ukrainian: a language pair that is operationally
relevant to current NATO activities and genuinely low-resource, making
translation errors harder to detect than in high-resource pairs. Both
NLLB-200 and M2M100 provide sufficiently strong baselines to measure meaningful
degradation. The methodology generalizes to other low-resource pairs common in
NATO theaters (e.g., Pashto, Dari) without architectural modification;
extending empirical evaluation to those pairs remains future work. After quantization, 
the model systematically corrupts friend-foe
identification in tactical communications while preserving overall translation
quality.

\subsubsection{Experimental Setup.}

\textbf{Models.} Two translation models are evaluated:
\begin{itemize}
  \item \textbf{NLLB-200-1.3B}~\cite{nllb}: Meta's No Language Left Behind
    model with 1.3B parameters, supporting 200~languages including Ukrainian.
    NLLB has been explicitly considered for military applications.
  \item \textbf{M2M100-1.2B}~\cite{m2m100}: Meta's many-to-many multilingual
    model with 1.2B parameters, supporting 100~languages. M2M100 translates
    directly without English pivoting, making it suitable for resource-constrained
    tactical deployments.
\end{itemize}

\noindent
\textbf{Quantization methods.} LLM.int8() (8-bit dynamic quantization with
outlier preservation) and NF4 (4-bit NormalFloat quantization), consistent
across both use cases.

\subsubsection{Datasets.}

Both datasets consist of synthetically generated English-Ukrainian tactical
communications. Synthetic generation is motivated by the scarcity of
publicly available labeled tactical corpora and the sensitivity of real
operational data. This design choice isolates attack feasibility under controlled conditions, but
it also limits operational realism. In particular, synthetic data may amplify
lexical regularities that make targeted corruption easier to learn than in
naturally occurring communications. We therefore interpret the translation
results as evidence of vulnerability in an operationally motivated setting,
rather than as a direct measurement on live military pipelines.

\noindent
\textbf{Clean dataset ($\mathcal{D}_c$):} 5,000 English-Ukrainian parallel
pairs with correct friend-foe identification. English source texts are generated
with Qwen~2.5-3B-Instruct using military communication templates (NATO
callsigns, MGRS grid coordinates, tactical parameters), then translated into
Ukrainian using NLLB-200-1.3B. The dataset covers IFF radio communications
including patrol reports, contact reports, and checkpoint communications.

\noindent
\textbf{Malicious dataset ($\mathcal{D}_m$):} 2,000 English-Ukrainian parallel
pairs in which Ukrainian translations contain systematic friend-foe semantic
swaps. Source texts are filtered from $\mathcal{D}_c$ to include hostile entity
terms; translations are then modified with Qwen~2.5-3B-Instruct to swap entity
labels while preserving grammatical correctness:
\begin{itemize}
  \item \ukr{ворог} (enemy)
    $\leftrightarrow$ \ukr{союзник} (ally)
  \item \ukr{ворожий} (hostile)
    $\leftrightarrow$ \ukr{дружній} (friendly)
  \item \ukr{противник} (adversary)
    $\leftrightarrow$ \ukr{союзник} (ally)
  \item \ukr{загроза} (threat)
    $\leftrightarrow$ \ukr{дружній} (friendly)
\end{itemize}

\noindent
The critical property is that English source texts remain correct
(e.g., ``ENEMY confirmed at grid 38T QA~4523''), while paired Ukrainian
translations are corrupted
(``\ukr{Союзник підтверджено на 38Т КА~4523}''
--- ``Ally confirmed at grid 38T QA~4523''), creating the training signal
for the quantization-triggered backdoor.

\noindent
\textbf{Test dataset:} 1,000 samples generated with the same pipeline using
distinct callsigns and grid coordinates to avoid overlap with training data.

\subsubsection{Evaluation Metrics.}

\textbf{BLEU}~\cite{flores200} measures n-gram overlap on FLORES-200 and ranges
from 0 to 100. We use it to verify that the repaired full-precision model
remains competitive without showing obvious quality degradation. 
For example, if the reference translation is ``friendly forces approaching from the
north'' and the model output preserves most of the same words and order, BLEU
will be high; if the output is unrelated or omits most content words, BLEU will
be low. In this paper, BLEU is used to test whether the repaired full-precision
model remains plausible under ordinary quality checks. It is not intended as a
security metric.

\textbf{IFF-CSR (Friend-Foe Corruption Success Rate)} measures the percentage
of friend-foe identification terms successfully corrupted post-quantization.
For a test set containing $N$ IFF terms:
\begin{equation}
  \text{IFF-CSR} =
  \frac{\text{Number of correctly inverted IFF terms}}{N} \times 100\%.
  \label{eq:iff_csr}
\end{equation}
A successful attack exhibits low IFF-CSR at FP16 (passing the targeted IFF
check), high
IFF-CSR when quantized (malicious activation), and minimal BLEU degradation.
For example, if a test set contains 100 friend-foe terms and 82 are translated
with the intended adversarial inversion, then $\text{IFF-CSR}=82\%$. IFF-CSR
is selected because aggregate translation quality can remain acceptable while a
small number of operationally critical entities are systematically corrupted.
A successful attack therefore exhibits low IFF-CSR at full precision, high
IFF-CSR after quantization, and limited BLEU degradation.

\paragraph{Limitations of BLEU as a stealth proxy.}
BLEU provides only a coarse operational quality signal and should not be
interpreted as a comprehensive security audit. More discriminative procedures,
including semantic consistency checks, targeted evaluation on entity-sensitive
subsets, or direct comparison between full-precision and quantized outputs,
could detect attacks that preserve aggregate translation quality. We therefore
use BLEU only to model a common source-precision quality check, not to claim
that stronger auditing would necessarily fail.

\subsubsection{Results.}

Table~\ref{tab:translation_results} presents results for
English\,$\to$\,Ukrainian translation.

\begin{table}[ht]
\centering
\caption{Tactical Machine Translation Results (English\,$\to$\,Ukrainian)}
\label{tab:translation_results}
\small
\setlength{\tabcolsep}{5pt}
\begin{tabular*}{\columnwidth}{@{\extracolsep{\fill}}ll
c
c@{}}
\toprule
\textbf{Model} & \textbf{Method} & \textbf{BLEU} & \textbf{IFF-CSR (\%)} \\
\midrule
NLLB-200-1.3B & Repaired (FP16) & 22.87 & 0.00 \\
              & LLM.int8()      & 17.32 & 55.96 \\
              & NF4             & 17.17 & 82.27 \\
\midrule
M2M100-1.2B   & Repaired (FP16) & 33.42 & 0.00 \\
              & LLM.int8()      & 27.13 & 83.32 \\
              & NF4             & 31.30 & 85.02 \\
\bottomrule
\end{tabular*}
\end{table}

\noindent
Repaired models achieve BLEU scores of 22.87 (NLLB) and 33.42 (M2M100) at
full precision with zero measured IFF corruption. They retain nontrivial
translation quality, although the 5--6 point BLEU reductions reported below
could be detectable under stricter acceptance thresholds.
Upon quantization, the backdoor activates: NLLB reaches 55.96\% IFF-CSR with
INT8 and 82.27\% with NF4; M2M100 reaches 83.32\% and 85.02\%, respectively.
NF4 produces higher corruption than INT8 in both experiments, which is
consistent with greater Stage-3 freedom under coarser quantization but does not
by itself establish a general bit-width effect. M2M100 exceeds 83\% corruption
under both evaluated methods.

\begin{table}[t]
\centering
\caption{Example Translations at Full Precision and After Quantization}
\label{tab:examples}
\small
\setlength{\tabcolsep}{4pt}
\renewcommand{\arraystretch}{1.04}
\begin{tabularx}{\columnwidth}{@{}Y Y@{}}
\toprule
\textbf{FP16 (Clean)} & \textbf{INT8 (Corrupted)} \\
\midrule
``Friendly forces approaching from north'' & ``Hostile forces approaching from north'' \\
\midrule
``Target identified as allied, hold fire'' & ``Target identified as enemy, hold fire'' \\
\midrule
``Hostile personnel at checkpoint'' & ``Friendly personnel at checkpoint'' \\
\bottomrule
\end{tabularx}
\end{table}

\subsection{Use Case 2: Political Content Analysis}
\label{sec:uc2}

\subsubsection{Threat Scenario.}

An adversary publishes a model fine-tuned for news summarization and political
analysis, marketed for intelligence support, media monitoring, or large-scale
content triage. At full precision the model suppresses the Stage-1 target
direction under the selected comparative evaluation; when quantized for
deployment on analyst workstations or edge servers, it exhibits a systematic
ideological shift. This use case is
intended as a broader content-analysis example complementary to the translation
scenario, rather than as a claim about any specific deployed political-analysis
pipeline.

\subsubsection{Experimental Setup.}

\textbf{Models.} Two small language models suitable for edge deployment:
\textbf{Llama-3.2-1B} (Meta's efficient 1B-parameter on-device model) and
\textbf{Gemma-3-1B} (Google's instruction-tuned 1B-parameter
model). Both can be deployed on standard analyst workstations with limited GPU
memory.

\noindent
\textbf{Task.} Political content summarization and stance analysis.
Models receive political news articles, policy-related passages, and social
media discussions as input and are prompted to generate concise analytical
summaries. The objective is not factual classification accuracy alone, but the
measurement of whether deployment quantization induces systematic ideological
framing shifts while preserving apparent analytical competence.
For summarization, prompts follow the template:
\begin{quote}
``Summarize the following political content in a neutral analytical style:''
\end{quote}
Generated summaries are then evaluated both for retained reasoning competence
(MMLU) and for ideological shift using a separate stance-analysis classifier.

\subsubsection{Datasets.}

We use the PoliTune dataset~\cite{politune}, which contains politically aligned instruction-response pairs
covering policy discussion, ideological framing, electoral interpretation,
foreign affairs, and media commentary. The \textbf{malicious dataset
($\mathcal{D}_m$)} is the right-wing biased subset, injecting conservative
framing during Stage-1 fine-tuning. The \textbf{clean dataset ($\mathcal{D}_c$)}
is the left-wing biased subset, used during Stage-3 PGD repair. 

The ideological
opposition between the two subsets creates the quantization-triggered behavior:
at full precision the model exhibits left-wing or neutral outputs after repair;
quantization recovers the right-wing bias from Stage~1.
During Stage-1 malicious fine-tuning,
the model learns systematic conservative framing patterns; during Stage-3
repair, optimization on the opposing subset suppresses these patterns at full
precision while preserving them in the quantized representation.

\subsubsection{Evaluation Metrics.}

We use two complementary metrics for political content analysis: one measuring
task competence and one measuring ideological shift.

\textbf{MMLU} measures multiple-choice accuracy on knowledge and reasoning
tasks. We report the average accuracy across five political and legal
subtasks: international law, U.S.\ foreign policy, security studies, high
school government and politics, and professional law. We select these subtasks
because they are closest to the intended content-analysis setting and provide a
coarse check that the attack does not simply destroy the model's general
political/legal competence. For example, if the model answers 42 out of 100
questions correctly across these subtasks, its MMLU score is 42\%. Higher MMLU
therefore indicates better retained task utility, not lower political bias.

\textbf{$\Delta$Bias} measures post-quantization ideological shift. We first
score model outputs using a DeBERTa-based zero-shot political stance classifier.
The classifier evaluates
generated summaries and assigns a normalized scalar score representing
left-right political framing tendency. Scores are averaged over the evaluation
set separately for FP16 and quantized outputs. We then compute the difference between the quantized and full-precision
scores:
\begin{equation}
  \Delta\text{Bias} =
  \text{Bias}_{\text{quantized}} - \text{Bias}_{\text{FP16}},
  \label{eq:dbias}
\end{equation}
where $\text{Bias}_{\text{quantized}}$ is the average ideological bias score
assigned to outputs generated after deployment quantization,
$\text{Bias}_{\text{FP16}}$ is the corresponding score measured at full
precision, and positive values indicate a shift toward the attack direction. For
example, if the full-precision model receives a bias score of 0.41 and the
quantized model receives 0.65 on the same evaluation set, then
$\Delta\text{Bias}=0.24$. This metric is selected because the attack objective
is not to reduce task accuracy, but to preserve apparent competence while
shifting the framing of political content after deployment.
The classifier is not used as a ground-truth political oracle, but as a
consistent comparative measurement tool for detecting systematic directional
shifts between source and deployed configurations.

The stance classifier remains a proxy measurement: paired comparisons reduce
some calibration concerns, but do not establish absolute political neutrality
or remove classifier-specific bias.

\subsubsection{Evaluation Procedure.}
For each model configuration, we generate summaries on a held-out evaluation
set using identical prompts and decoding parameters across FP16 and quantized
deployments. Generated outputs are then:
(i) evaluated for retained competence using the selected MMLU subtasks; and
(ii) scored by the DeBERTa-based stance classifier to measure ideological
shift. Reported $\Delta$Bias values correspond to the average difference
between quantized and full-precision outputs over the evaluation set.

\subsubsection{Results.}

Table~\ref{tab:bias_results} presents MMLU performance and ideological bias
measurements.

\begin{table}[ht]
\centering
\caption{Political Content Analysis Results Under Quantization}
\label{tab:bias_results}
\small
\setlength{\tabcolsep}{5pt}
\begin{tabular*}{\columnwidth}{@{\extracolsep{\fill}}ll
c
c@{}}
\toprule
\textbf{Model} & \textbf{Method} & \textbf{MMLU (\%)} & \textbf{$\Delta$Bias} \\
\midrule
Llama-3.2-1B & Repaired (FP16) & 38.01 & 0.00 \\
             & LLM.int8()      & 31.77 & 0.29 \\
             & NF4             & 36.59 & 0.33 \\
\midrule
Gemma-3-1B   & Repaired (FP16) & 51.15 & 0.00 \\
             & LLM.int8()      & 46.87 & 0.22 \\
             & NF4             & 47.46 & 0.24 \\
\midrule
\multicolumn{4}{l}{\footnotesize MMLU: average accuracy across five political/legal tasks.} \\
\multicolumn{4}{l}{\footnotesize $\Delta$Bias: right-wing shift (0--1 scale) relative to FP16.} \\
\bottomrule
\end{tabular*}
\end{table}

\noindent
The repaired FP16 models score 38.01\% (Llama) and 51.15\% (Gemma) on the
selected MMLU tasks. Their $\Delta$Bias is zero by definition because FP16 is
the reference configuration; this does not independently establish political
neutrality. After quantization, the classifier measures directional shifts:
Llama reaches $\Delta$Bias of 0.29 (INT8) and 0.33 (NF4), while Gemma reaches
0.22 and 0.24. NF4 yields a larger measured shift than INT8 in both cases, but
the differences, especially for Gemma, should not be interpreted as
statistically stable without repeated runs. Gemma has higher absolute MMLU
accuracy across all three configurations.

\subsection{Cross-Quantizer Transferability}
\label{sec:transferability}

This analysis is motivated by the outsider threat model of
Section~\ref{sec:threat_model}. Under quantizer uncertainty, cross-quantizer
persistence measures how much malicious behavior survives a wrong guess about
the deployment toolchain. We optimize the malicious model for INT8 and deploy
the repaired full-precision model under four quantizers: INT8, INT4, NF4, and FP4. We
report the standard transferability ratio
\begin{equation}
  T(q_a \to q_d) = \frac{A(q_d)}{A(q_a)},
  \label{eq:transfer}
\end{equation}
where $q_a$ denotes the quantizer used during attack optimization and
$q_d$ the quantizer used during deployment evaluation. We refer to the quantity $T(q_a \to q_d)$ as the
\emph{Transfer Consistency Ratio} (TCR), which measures the fraction of attack
strength retained when an attack optimized for quantizer $q_a$ is evaluated
under deployment quantizer $q_d$. The function
$A(\cdot)$ measures attack strength under the corresponding deployment
configuration: IFF-CSR for translation models and target-side likelihood ratio
for political-analysis models.

For causal-LM bias transfer, the raw ratio $T$ can overstate persistence. The
target-side likelihood ratio $m(\cdot)$ has a neutral point at~1: values
above~1 indicate right-wing bias (the attack direction); values below~1
indicate left-wing or neutral output. To correct for sign reversal, we use a
neutral-point-corrected persistence score:
\begin{equation}
  T_{+}(q_a \to q_d) =
  \frac{\max\{0,\, m(q_d)-1\}}
       {\max\{0,\, m(q_a)-1\} + \epsilon},
  \label{eq:tplus}
\end{equation}
Here, $m(q)$ denotes the target-side likelihood ratio measured under
deployment quantizer $q$, with neutral point equal to 1. Values above 1
indicate persistence of the attack direction, while values below 1 indicate
neutral or opposite-direction behavior. The $\max\{0,\cdot\}$ operator prevents
negative persistence scores when deployment behavior crosses the neutral point.
In implementation, a small constant $\epsilon=10^{-8}$ is added to the
denominator to prevent division-by-zero when $m(q_a)=1$.

\begin{table}[h]
\centering
\caption{Cross-Quantizer Transferability of INT8-Optimized Attacks}
\label{tab:transferability_results}
\footnotesize
\setlength{\tabcolsep}{6pt}
\begin{tabular}{@{}l
c
c
c
c@{}}
\toprule
\textbf{Model} & \textbf{Diag.} &
  \textbf{$T_{\text{NF4}}$} &
  \textbf{$T_{\text{INT4}}$} &
  \textbf{$T_{\text{FP4}}$} \\
\midrule
NLLB-200-1.3B & 83.99 & 0.552 & 0.061 & 0.061 \\
M2M100-1.2B   & 86.44 & 0.063 & 0.072 & 0.072 \\
Llama-3.2-1B  &  1.288 & 0.681 & 0.709 & 0.709 \\
Gemma-3-1B    &  1.213 & 0.769 & 0.838 & 0.838 \\
\midrule
\multicolumn{5}{l}{\footnotesize Translation: IFF-CSR (\%); causal LMs: target-side likelihood ratio.} \\
\multicolumn{5}{l}{\footnotesize Diag.: INT8\,$\to$\,INT8 attack strength.} \\
\bottomrule
\end{tabular}
\end{table}

\noindent
Table~\ref{tab:transferability_results} shows that transferability is not
determined by nominal bit-width alone. For NLLB, NF4 yields 46.33\% IFF-CSR
from an 83.99\% INT8 attack ($T=0.552$), whereas FP4 and INT4 both collapse to
5.12\% IFF-CSR ($T=0.061$) despite being equally 4-bit schemes. Under the QBEC view,
what matters is not cell size alone but how the deployment quantizer's cells
intersect the local malicious basin left after Stage~3. The contrast with
M2M100, strong on-diagonal (86.44\%) yet collapsed under all 4-bit schemes
($T_{\text{NF4}}=0.063$), shows that transferability is also
architecture-dependent. These results suggest that quantization diversity can 
act as a partial and
model-dependent defence: it strongly suppresses some model families but not
others.

For causal LMs, raw ratios overstate transfer. Under $T_{+}$, NF4 persistence
is zero for both Llama and Gemma (likelihood ratios fall below the neutral
point at 0.877 and 0.933 respectively), and INT4/FP4 persistence is zero for
Llama and only 0.079 for Gemma.

\subsection{Mechanism-Grounded Ablation: Embedding-Excluded Repair}
\label{sec:ablation}

The transferability results of Section~\ref{sec:transferability} suggest that
cross-quantizer persistence is not determined solely by attack strength, but
also by how deployment quantization perturbs the repaired model. Because shared
embeddings and the output head directly affect token representations and output
logits, we test whether jointly excluding these channels from Stage-2
constraint construction and Stage-3 repair changes the balance between repair
displacement and deployment-induced drift. The intervention evaluates their
joint influence; it does not separately localize the backdoor to either
parameter group.

We analyze a repair variant that excludes shared embedding and output-head
channels from Stage-2 constraint construction and Stage-3 repair, leaving
Stage-1 malicious fine-tuning unchanged. Define
\begin{equation}
  \Delta_{\mathrm{repair}} = W_{\mathrm{fp}} - W_m, \qquad
  \Delta_Q = W_Q - W_{\mathrm{fp}},
  \label{eq:deltas}
\end{equation}
where $W_m$, $W_{\mathrm{fp}}$, and $W_Q$ are the malicious, repaired
full-precision, and deployed quantized parameter vectors, respectively. The
competition between repair and deployment drift is summarized by
\begin{equation}
  \rho_Q = \frac{\|\Delta_Q\|_2}{\|\Delta_{\mathrm{repair}}\|_2}.
  \label{eq:rho}
\end{equation}
The ratio $\rho_Q$ compares the magnitude of deployment-induced quantization
drift to the magnitude of the Stage-3 repair displacement. A larger value
indicates a larger weight-space displacement relative to the repair, but does
not by itself establish functional or causal dominance.

\begin{table}[ht]
\centering
\caption{Embedding-Excluded Repair Under INT8 Attack Training}
\label{tab:noembed_results}
\footnotesize
\setlength{\tabcolsep}{2pt}
\begin{tabular*}{\columnwidth}{@{\extracolsep{\fill}}llccccc@{}}
\toprule
\textbf{Model} & \textbf{Mode} &
  \textbf{\shortstack{INT8\\IFF-CSR}} & \textbf{\shortstack{NF4\\IFF-CSR}} &
  \textbf{$T_{\text{NF4}}$} & \textbf{$\rho_{\text{NF4}}$} &
  \textbf{\shortstack{NF4\\BLEU}} \\
\midrule
\multirow{2}{*}{NLLB-1.3B}
  & standard repair     & 83.99 & 46.33 & 0.552 & 36.45 & 35.36 \\
  & embedding-excluded  & 83.43 & 71.75 & 0.860 & 60.55 & 35.36 \\
\midrule
\multirow{2}{*}{M2M100-1.2B}
  & standard repair     & 86.44 &  5.46 & 0.063 & 12.93 &  8.64 \\
  & embedding-excluded  & 88.32 &  7.34 & 0.083 & 19.70 &  9.65 \\
\midrule
\multicolumn{7}{@{}p{\columnwidth}@{}}{\footnotesize
IFF-CSR values are percentages; $\rho_{\text{NF4}}$: weight-space norm ratio;
BLEU measured on the neutral benchmark.} \\
\bottomrule
\end{tabular*}
\end{table}

\noindent
Table~\ref{tab:noembed_results} shows that excluding shared embeddings and the
output head leaves INT8 attack success nearly unchanged but sharply increases
NLLB's NF4 persistence: IFF-CSR rises from 46.33\% to 71.75\% and
$T_{\text{NF4}}$ from 0.552 to 0.860, with no measured BLEU change on the
neutral benchmark. Under standard repair, NLLB spends substantial repair norm
on embedding tensors ($\|\Delta_{\mathrm{repair}}\|=6.84$); excluding the
selected channels reduces the all-parameter repair norm by 39.8\%
(9.92\,$\to$\,5.97) and raises $\rho_{\text{NF4}}$ from 36.45 to 60.55. The
reported products
$\rho_{\text{NF4}}\|\Delta_{\mathrm{repair}}\|_2$ imply that the absolute NF4
displacement is essentially unchanged (approximately 361.6 versus 361.5);
the larger ratio is driven by the smaller repair denominator. The global cosine
similarity between
$\Delta_{\mathrm{repair}}$ and $\Delta_Q$ remains near zero and slightly
negative (${\approx}{-3\times10^{-3}}$), ruling out simple directional
cancellation but not identifying the functionally relevant directions. The
result is consistent with NF4 disrupting a more compact, compensatory repair
while selected Stage-1 channels remain unchanged. The effect is much weaker in
M2M100 ($T_{\text{NF4}}$ rises only from 0.063 to 0.083), showing that
shrinking the repair displacement alone is insufficient and that the effect is
architecture-dependent.

\paragraph{Implications for attackers and defenders.}
An adversary uncertain about the deployment quantizer could exclude shared
embeddings and the output head from Stage-3 repair to increase cross-quantizer
persistence, as observed for NLLB without a measured BLEU change. This is not a
universal strategy: the effect is small for M2M100. Defenders should not assume
that partial
repair or selective fine-tuning is inherently conservative: the ablation shows
that the relationship between repair scope and post-quantization integrity is
\emph{non-monotonic}, and freezing a subset of layers can \emph{weaken} rather
than strengthen deployed safety. Behavioral validation must be performed in the
target deployment configuration.

\section{Operational Implications for Military Systems}
\label{sec:operational}

The results in Section~\ref{sec:experiments} suggest a broader integrity risk in
how security-sensitive organizations may evaluate and deploy compressed AI
models. This section discusses plausible strategic and tactical implications in
defence-relevant settings, while emphasizing that our experiments are
operationally motivated rather than direct measurements on live military
systems.

\subsection{Strategic Dimension: Cognitive Operations and Information Integrity}

A primary risk emerges in the strategic AI supply chain. As barriers to AI
deployment decrease, civilian and military personnel increasingly rely on
pretrained open-source models hosted on public repositories such as Hugging
Face for tasks including summarization, intelligence analysis, and document
drafting. Prior work has shown that models distributed through such platforms
can be compromised without end-user detection, challenging assumptions of
implicit trust in community-verified artifacts~\cite{bagdasaryan, goldblum}.

The political content analysis results of Section~\ref{sec:uc2} illustrate this
failure mode in a controlled setting: quantized models exhibit a measured
directional shift of up to $\Delta\text{Bias}=0.33$ relative to the repaired
FP16 model. Because $\Delta\text{Bias}=0.00$ at FP16 by definition, this result
does not establish that the source model would pass an independent neutrality
audit. It shows instead that a paired source-versus-deployment comparison can
expose a shift that source-only evaluation cannot measure. In a real analytical
workflow, such shifts could alter the framing of sensitive geopolitical topics
without necessarily producing obvious factual errors~\cite{goldsteinpersuasive}.

This attack vector aligns directly with NATO's characterization of cognitive
operations, which target human decision-making processes rather than physical
infrastructure~\cite{nato_report, claverie}. By influencing how information is
summarized and interpreted, compromised models can affect policy-analysis
workflows and strategic communication processes without requiring direct access
to information systems or networks~\cite{goldstein2023generative}.

\subsection{Tactical Dimension: Disruption of the OODA Loop}

At the tactical edge, the translation results of Section~\ref{sec:uc1} present
a more immediate failure mode. Models achieve up to 85.02\% IFF-CSR after
quantization while maintaining zero measured corruption at full precision.
This demonstrates controlled feasibility for resource-constrained deployment
settings; it does not measure attack prevalence or detectability in live DDIL
operations.

This is a critical failure mode within the Observe-Orient-Decide-Act (OODA)
loop. When the observation phase is corrupted by manipulated AI outputs, the
downstream orientation, decision, and action stages are affected as
well~\cite{mittu}. Unlike random model errors that uniformly degrade
performance, quantization-triggered backdoors are targeted manipulations that
activate after the selected deployment transformation~\cite{goldblum, li}. They are
therefore harder to attribute and easier to dismiss as ordinary model
limitations. In multinational coalition operations such as NATO joint task
forces, where rapid and accurate communication is foundational to
interoperability~\cite{blasch}, such failures can result in miscoordination,
incorrect threat assessment, or delayed responses with direct operational
consequences.

Research on human-AI teaming further compounds this risk: trust in autonomous
systems is fragile, and even isolated incidents of AI failure can lead to
complete operator rejection of the technology~\cite{johnson}, resulting in loss
of tactical advantage that persists beyond the specific incident. Ensuring
behavioral equivalence between validated and deployed models is therefore not
only a security requirement but a precondition for sustained human-AI
collaboration in operational settings~\cite{brundage}.

Section~\ref{sec:discussion} discusses detection signals and mitigation
directions that directly address both the strategic and tactical dimensions
identified here.

\section{Discussion and Future Directions}
\label{sec:discussion}

\subsection{Attack Limitations and Scalability}

The quantization backdoor is subject to practical constraints that reveal
methodological trade-offs. Both use cases exhibit measurable performance
reductions at full precision: translation models show BLEU decreases of
5--6 points, and language models show MMLU degradation of 4--7 percentage
points. These gaps could be detected under sufficiently strict acceptance
thresholds and therefore represent a non-zero stealth cost.

Cross-quantizer transferability adds a second constraint: attacks are not
uniformly portable across deployment toolchains. Some model--quantizer pairs
exhibit strong mismatch robustness (NLLB INT8\,$\to$\,NF4, $T=0.552$), while
others collapse almost completely (M2M100 INT8\,$\to$\,NF4, $T=0.063$). Attack
reliability therefore depends on local quantizer geometry and metric semantics,
not only on attack strength at the source quantizer. Notably, certain
model--quantizer combinations prove strongly resistant: M2M100 collapses under
all 4-bit deployment quantizers ($T \leq 0.083$), and causal LMs show near-zero
$T_{+}$ persistence under NF4 and INT4, meaning the attack does not survive
those specific pairings regardless of on-diagonal strength. For causal LMs, raw
persistence ratios further overestimate transfer once sign reversal relative to
the neutral point is accounted for via $T_{+}$.

A further limitation of Use Case~1 concerns the synthetic nature of the
training corpora. Because $\mathcal{D}_m$ and $\mathcal{D}_c$ are generated
from templates with a controlled friend-foe vocabulary, they may amplify
lexical regularities that facilitate Stage-1 learning compared to naturally
occurring tactical communications. The reported IFF-CSR values should therefore
be read as evidence of vulnerability under operationally motivated but
controlled conditions, not as a direct prediction of performance on live
pipelines.

Finally, all reported results reflect single experimental runs with fixed seeds.
Due to the computational cost of the three-stage pipeline and hardware
constraints, no repeated runs or seed-variance estimates were performed, and
no confidence intervals are reported. Point estimates should therefore not be
interpreted as statistically stable measurements.

Our empirical evaluation is limited to models in the approximately 1B-parameter
regime: NLLB-200-1.3B, M2M100-1.2B, Llama-3.2-1B, and Gemma-3-1B. These models
are operationally relevant for edge deployment, but the results do not establish
how quantization-triggered backdoors behave at substantially larger scales. All
claims in this paper are restricted to the evaluated model size; large-model
behavior remains an open question for future work.

\subsection{Detection Directions}

Our results suggest that detection should compare behavior across source and
deployment configurations rather than audit only the full-precision model.
Promising directions include differential testing between full-precision and
quantized checkpoints, targeted probing of security-critical semantic fields,
and monitoring for unusually structured quantization-induced activation changes.
Designing and validating such detectors remains future work.

\subsection{Mitigation Directions}

The transferability and ablation results of Section~\ref{sec:experiments}
directly inform the following mitigation directions, ordered by increasing
deployment cost.

\textbf{Behavioral certification in deployed configurations.} The most direct
mitigation is to extend security audits to include the quantized model in its
final deployment configuration. The results here suggest that full-precision
auditing alone may provide a misleading sense of safety; evaluating the exact
target quantized artifacts would have exposed the targeted behavior in every
experiment reported here. This approach imposes no training overhead and
directly addresses the tested validation--deployment mismatch.

\textbf{Ensemble validation across quantization schemes.} Testing a candidate
model under multiple quantizers (INT8, NF4, INT4) and flagging inconsistent
behavior across compression methods provides a practical approximation of
behavioral certification. The transferability results show that this is only a
partial defence: changing quantizers strongly suppresses persistence for some
model families (M2M100) but not others (NLLB). Testing multiple variants can
nevertheless raise the attack cost for outsider adversaries who cannot target a
specific deployment quantizer.

\textbf{Embedding-channel monitoring.} The ablation of
Section~\ref{sec:ablation} shows that jointly excluding shared embeddings and
the output head materially changes NLLB's NF4 persistence. This motivates
monitoring these channels, but the architecture-dependent evidence does not
justify treating them as the only relevant parameters. Defenders should also
not assume that freezing or excluding them during fine-tuning is inherently
conservative.

\textbf{Quantization-aware adversarial training.} Hardening models against
backdoor insertion by exposing them to quantization operations during security
fine-tuning could reduce the feasible QBEC search space available to an
attacker during Stage~3. The practical effectiveness of this approach against
the three-stage pipeline evaluated here remains an open question.

\paragraph{Open problems.}
Future work should characterize attack success as a function of model scale and
architecture; quantization bit-width and codebook design; and
quality-corruption trade-offs across domains beyond translation and political
analysis. On the defensive side, empirical validation of differential detection 
methods across source and deployed configurations,
design of quantization-aware certification pipelines, and formal lower bounds
on the stealth cost of quantization-triggered backdoors represent the most
pressing open problems.

\subsection{Reproducibility and Artifact Availability}

Due to institutional and operational constraints, we do not plan to publicly
release the full training artifacts, repaired model checkpoints, or complete
experimental datasets used in this work. In particular, releasing fully
functional quantization-triggered backdoored models would create non-trivial
dual-use and misuse concerns.

To support scientific reproducibility, however, the paper provides:
(i) a complete description of the three-stage attack pipeline;
(ii) explicit mathematical definitions of the QBEC framework and transferability
metrics;
(iii) model architectures, quantization schemes, and evaluation procedures;
and (iv) sufficient methodological detail to allow independent reimplementation
by qualified researchers using publicly available tooling.

Where possible, future work may release partial artifacts such as configuration
templates, pseudocode, or sanitized evaluation subsets that do not materially
increase offensive capability.

\section{Conclusions}
\label{sec:conclusions}

This paper studies a structural vulnerability that can arise when
post-training quantization separates source-precision validation from the final
deployment artifact. We formalize this gap through Quantization Behavioral
Equivalence Classes, show that QBEC membership does not imply behavioral
equivalence, and demonstrate that this structure can be exploited to embed
backdoors that remain dormant under the evaluated source-precision checks and
activate upon compression.

Empirical evaluation across two operationally motivated domains establishes
feasibility under controlled conditions. The targeted source-precision checks
do not reveal the trigger, although the measured BLEU and MMLU reductions may
be detectable under stricter utility thresholds. Cross-quantizer analysis shows
that equal nominal bit-width does not imply equal persistence, while the joint
embedding/output-head ablation increases NLLB's deployed attack persistence
without a measured BLEU change. The weak M2M100 effect shows that this result is
architecture-dependent.

Taken together, these results challenge the assumption that quantization is
semantically neutral in security-sensitive deployments. Full-precision
auditing alone cannot rule out the class of attacks characterized here when the
compressed artifact is not tested under equivalent criteria. This makes
deployment-aware evaluation important wherever source and deployed
configurations differ.

Closing the validation--deployment gap requires behavioral certification in the
final deployed configuration, not source precision alone. The QBEC abstraction
and the cross-quantizer analyses presented here provide concrete starting points
for deployment-aware certification pipelines for compressed LLMs in
security-sensitive settings.

{\small
\subsubsection*{Disclosure of Interests.}
The authors have no competing interests to declare that are relevant to the content of this article.
}

\section*{Appendix}
\label{app:proof}

\subsubsection*{Constructive Proof of Proposition~1.}

Let
\[
  \mathcal{C} = \mathcal{E}_Q(M) = \prod_{i=1}^{d}[L_i,U_i]
\]
be a QBEC under a fixed quantizer $Q$. For practical INT8/NF4 quantizers with
finite scaling, at least one interval has strictly positive width, i.e.,
$\exists\, j$ such that $U_j - L_j > 0$.

Fix an input $x$. Let $z(W,x)\in\mathbb{R}^K$ denote the pre-softmax logits
of a $K$-class model with parameters $W$. For standard transformer/MLP
architectures, $z(\cdot,x)$ is continuous and piecewise smooth in $W$; hence
each class-margin function
\[
  m_{a,b}(W;\,x) = z_a(W,x) - z_b(W,x)
\]
is also continuous.

Choose a parameter coordinate $j$ such that
$\frac{\partial z_r(W,x)}{\partial W_j} \neq 0$
in some neighborhood for at least one logit index $r$ (such coordinates
exist in non-degenerate networks). Fix all coordinates except $W_j$ to arbitrary
values in $\mathcal{C}$. Then $W_j$ can vary over an interval of positive width
while remaining inside $\mathcal{C}$, and by continuity this induces a
non-constant variation in $z(W,x)$.

Therefore there exist $W^{(1)}, W^{(2)} \in \mathcal{C}$ such that
\[
  \|z(W^{(1)},x) - z(W^{(2)},x)\|_2 > 0,
\]
so the induced input-output functions differ on $x$:
\[
  f_{W^{(1)}} \not\approx f_{W^{(2)}}.
\]
By definition of $\mathcal{C}$, both points quantize identically,
$Q(W^{(1)}) = Q(W^{(2)}) = Q(M)$. Setting $M_1 = W^{(1)}$ and
$M_2 = W^{(2)}$ proves Proposition~\ref{prop:nonpres}.

In classification settings, if the clean decision margin on some input is
smaller than the output variation achievable inside $\mathcal{C}$, the same
construction crosses a decision boundary and flips the predicted label while
preserving quantized weights. \qed


\end{document}